\documentclass[11pt,letterpaper]{article}

\usepackage[utf8]{inputenc}
\usepackage[T1]{fontenc}
\usepackage{mathptmx}           % Times-like font
\usepackage[margin=1in]{geometry}
\usepackage{amsmath,amssymb}
\usepackage{graphicx}
\usepackage{booktabs}
\usepackage{hyperref}
\usepackage{xcolor}
\usepackage{enumitem}
\usepackage{algorithm}
\usepackage{algorithmic}
\usepackage{caption}
\usepackage{subcaption}
\usepackage{url}
\usepackage{natbib}
\usepackage{float}
\usepackage{tikz}
\usepackage{amsthm}
\newtheorem{proposition}{Proposition}
\usetikzlibrary{positioning,arrows.meta,shapes.geometric,calc,fit,backgrounds}

\hypersetup{
  colorlinks=true,
  linkcolor=blue!60!black,
  citecolor=blue!60!black,
  urlcolor=blue!60!black
}

\title{%
  \textbf{HyphaeDB: A Living Knowledge Topology for Agent-First Memory} \\[8pt]
  \large Using HNSW Graph Structure as a Gossip-Based Communication Fabric \\
  for Multi-Agent Knowledge Propagation and Emergent Coordination
}

\author{
  Krishna Halaharvi \\
  \textit{HyphaeDB} \\
  \texttt{hello@hyphaedb.com} \\
  \url{https://github.com/hyphae-db}
}

\date{March 2026}

\begin{document}
\maketitle

% ============================================================
% Abstract
% ============================================================
\begin{abstract}
Every existing vector database and agent memory framework treats memory as passive storage that agents query explicitly. No system propagates knowledge between agents through the memory layer itself. We introduce \textbf{HyphaeDB}, an agent-native memory infrastructure that reinterprets the Hierarchical Navigable Small World (HNSW) graph topology---the data structure at the core of every modern vector database---not as a search optimization, but as a \emph{communication fabric} for multi-agent AI systems. In HyphaeDB, agents are nodes in the vector space with persistent positions, knowledge propagates via a gossip protocol through the graph's neighbor structure with energy-based attenuation, and emergent behaviors---contradiction detection, pattern crystallization, and consensus formation---arise from the combination of topology, propagation dynamics, and local interaction rules. We present the architecture built on three primitives (knowledge nodes, topology edges, and memory diffs), a multi-layer abstraction hierarchy with promotion via emergent consensus, and theoretical analysis grounding the system in small-world network theory, epidemic broadcast protocols, and swarm intelligence. We provide a reference implementation on PostgreSQL with pgvector and describe a concrete deployment in Swarm-Driven Development, a multi-agent software engineering methodology. HyphaeDB represents, to our knowledge, the first system to combine navigable small-world topology with gossip-based knowledge propagation for multi-agent coordination.
\end{abstract}

\textbf{Keywords:} multi-agent systems, agent memory, HNSW, gossip protocols, knowledge propagation, swarm intelligence, vector databases, emergent behavior

% ============================================================
\section{Introduction}
\label{sec:introduction}
% ============================================================

The rapid adoption of AI agent systems---projected to grow from \$7.5 billion in 2025 to over \$50 billion by 2030 \citep{marketsandmarkets2025agents, grandview2025agents}---has created urgent demand for infrastructure that supports not just individual agent intelligence but \emph{collective} agent intelligence. As organizations deploy an average of 12 agents per enterprise \citep{salesforce2026connectivity} and Gartner predicts 40\% of enterprise applications will embed task-specific AI agents by end of 2026 \citep{gartner2025agents}, the infrastructure gap for multi-agent coordination has become a critical bottleneck.

At the center of this gap is memory. Every existing agent memory framework---Mem0 \citep{mem0paper2025}, Letta \citep{packer2023memgpt}, LangMem \citep{langmem2025}, and others---implements the same fundamental pattern: extract knowledge from agent interactions, store it in a vector database or graph, and retrieve it when an agent queries for it. Memory is passive. Between writes and reads, the storage layer is inert. It has no mechanism to propagate knowledge between agents, no awareness of which agents need what information, and no capacity for emergent behavior.

This paper introduces \textbf{HyphaeDB}, a fundamentally different primitive for agent memory. HyphaeDB reinterprets the HNSW (Hierarchical Navigable Small World) graph \citep{malkov2018hnsw}---the index structure used by every major vector database for approximate nearest neighbor search---as a \emph{communication fabric} where:

\begin{enumerate}[nosep]
  \item \textbf{Agents are nodes} in the vector space with persistent positions that drift based on their work.
  \item \textbf{Knowledge propagates} via a gossip protocol through the graph's neighbor structure, with an energy model that attenuates based on relevance and importance.
  \item \textbf{Emergent behaviors} arise from the topology: contradiction detection, pattern crystallization, and consensus formation occur without explicit programming.
\end{enumerate}

The result is a system where memory does not sit in storage waiting to be found. It flows through a living topology, finding the agents that need it, surfacing conflicts before they become errors, and crystallizing patterns that no single agent could observe in isolation.

\textbf{Contributions.} This paper makes four contributions:

\begin{enumerate}[nosep]
  \item We identify the reinterpretation of HNSW graph topology as a communication fabric for multi-agent knowledge propagation---to our knowledge, the first proposal to use navigable small-world graphs for this purpose.
  \item We design a complete gossip protocol adapted for semantic knowledge propagation, including an energy-based attenuation model, multi-layer abstraction with promotion via emergent consensus, and standing beacon subscriptions.
  \item We provide theoretical analysis grounding the system's properties in small-world network theory, epidemic broadcast models, and swarm intelligence.
  \item We present a reference implementation on PostgreSQL with pgvector and describe deployment in a multi-agent software development workflow.
\end{enumerate}

% ============================================================
\section{Related Work}
\label{sec:related}
% ============================================================

\subsection{Agent Memory Systems}

The agent memory landscape has evolved rapidly since 2023. \textbf{MemGPT/Letta} \citep{packer2023memgpt} pioneered an OS-inspired memory hierarchy---core memory (in-context, analogous to RAM), recall memory (conversation history, analogous to disk), and archival memory (external databases). Its 2025 innovation of ``sleep-time compute'' separated memory management from active conversation, allowing asynchronous memory refinement. \textbf{Mem0} \citep{mem0paper2025} implements a two-phase memory pipeline (extraction then update) with an LLM reconciling candidate memories against a vector store, achieving 26\% higher accuracy than OpenAI's memory on the LOCOMO benchmark. Its graph variant (Mem0$^g$) uses directed labeled graphs for entity-relation storage. \textbf{LangMem} \citep{langmem2025} provides memory tools integrated with LangGraph's storage layer, supporting semantic, episodic, and procedural memory types.

All these systems share a fundamental limitation: memory is something agents \emph{read from} and \emph{write to}. No system propagates knowledge between agents through the memory layer itself. A 2025 survey on LLM-based multi-agent system memory \citep{masmemory2025survey} confirms that ``MAS memory must support coordination across multiple contextual layers,'' but identifies no production system achieving this. The A-Mem system \citep{amem2025} introduces Zettelkasten-inspired dynamic memory organization where memories self-link, representing the closest academic work to active memory, but operates within a single agent.

\subsection{HNSW and Navigable Small-World Graphs}

The HNSW algorithm \citep{malkov2018hnsw} constructs a multi-layer proximity graph where Layer~0 contains all data points with dense local connections, and upper layers contain progressively sparser subsets with longer-range links. Layer assignment follows an exponential distribution: $\ell(x) = \lfloor -\ln(\text{uniform}(0,1)) \cdot m_L \rfloor$. The structure inherits the small-world properties identified by \citet{watts1998smallworld}---high local clustering combined with short average path lengths scaling as $O(\log N)$---while satisfying the navigability condition proven by \citet{kleinberg2000navigation}: greedy routing achieves $O(\log^2 N)$ delivery time when long-range connections follow $P(u \to v) \propto d(u,v)^{-d}$. \citet{malkov2016ponomarenko} demonstrated that HNSW achieves this distribution emergently through incremental construction.

The 2024 Hub Highway Hypothesis \citep{munyampirwa2024hubhighway} revealed that HNSW graphs naturally develop well-connected hub nodes serving as routing shortcuts, structurally analogous to backbone routers in communication networks. These hubs form without explicit design, emerging from the construction process.

No published research uses HNSW topology for communication routing or agent coordination. However, precedents exist in adjacent domains: Symphony \citep{manku2003symphony} built a DHT on Kleinberg's navigable small-world model; Freenet's darknet \citep{clarke2005freenet} used small-world routing for anonymous communication; skip graphs \citep{aspnes2003skipgraphs} generalized the multi-layer structure to distributed settings.

\subsection{Gossip Protocols}

The foundational work by \citet{demers1987epidemic} established two complementary gossip mechanisms: \emph{anti-entropy} (full state synchronization) and \emph{rumor mongering} (probabilistic spreading). The SI epidemic model governing gossip follows logistic growth: starting from one informed node in $N$, approximately $N \cdot (1 - e^{-fk/N})$ nodes are reached after $k$ rounds with fanout $f$, achieving full propagation in $O(\log N)$ rounds with high probability.

SWIM \citep{das2002swim} achieves $O(N)$ message complexity by piggybacking updates on probe/ack messages. HyParView \citep{leitao2007hyparview} maintains dual partial views providing fault tolerance under 80--95\% node failure. Plumtree \citep{leitao2007plumtree} constructs self-organizing broadcast trees achieving near-optimal $N-1$ payload messages per broadcast. The T-Man protocol \citep{jelasity2005tman} uses gossip exchanges to construct arbitrary overlay topologies with logarithmic convergence.

Spatial gossip \citep{kempe2001spatial} analyzed gossip where peer selection probability is inversely proportional to distance, achieving $O(\log^{1+\epsilon} N)$ delivery at the navigable sweet spot---directly applicable to gossip over HNSW's distance-aware topology. A 2025 vision paper \citep{benezit2025gossipagents} explicitly proposes gossip as a first-class primitive for agentic multi-agent systems but specifies no topology. HyphaeDB provides the topology.

\subsection{Swarm Intelligence and Emergence}

Reynolds' Boids model \citep{reynolds1987flocks} demonstrated that three local rules produce coherent flocking without central control. Ant colony optimization \citep{dorigo2004aco} showed that local pheromone dynamics produce globally optimal paths. Grass\'{e}'s stigmergy \citep{grasse1959stigmergy} established that complex coordinated activity can occur without planning, communication, or mutual awareness---precisely the kind of coordination HyphaeDB enables digitally.

\citet{olfatisaber2007consensus} showed that network topology directly affects consensus dynamics through the eigenvalues of the graph Laplacian, with small-world properties enabling fast convergence. \citet{moore2000epidemics} demonstrated that epidemic spreading on small-world networks is dramatically accelerated: high clustering ensures local transmission while short paths enable rapid system-wide propagation. \citet{buehler2025agentic} showed that recursive graph expansion produces scale-free networks with emergent conceptual hubs and bounded diameter growth, exhibiting ``human-like hierarchical knowledge formation.''

% ============================================================
\section{Architecture}
\label{sec:architecture}
% ============================================================

HyphaeDB is built on three primitives. All system functionality---knowledge storage, agent communication, subscription management, consensus formation---reduces to compositions of these constructs.

\subsection{Primitive 1: Knowledge Nodes}

Every entity in the topology is a \emph{knowledge node} with a position in vector space:

\begin{equation}
  \text{node} = (\text{id}, \tau, \mathbf{e}, \ell, \phi)
\end{equation}

\noindent where $\tau \in \{\texttt{cell}, \texttt{agent}, \texttt{scene}, \texttt{beacon}\}$ is the node type, $\mathbf{e} \in \mathbb{R}^d$ is the embedding vector ($d = 1536$ by default), $\ell \in \{0, 1, 2\}$ is the layer assignment, and $\phi$ is a type-specific payload.

\textbf{Cell nodes} represent atomic knowledge units: decisions, facts, patterns, risks, lessons, constraints, and other typed information. Each cell has a salience score $s \in [0, 1]$ indicating importance and a confidence score $c \in [0, 1]$ indicating certainty.

\textbf{Agent nodes} represent AI agents with persistent positions in the space. An agent's position is computed as the weighted centroid of its recently accessed or created cells:

\begin{equation}
  \mathbf{e}_{\text{agent}} = \frac{\sum_{i=1}^{n} w_i \cdot \mathbf{e}_i}{\|\sum_{i=1}^{n} w_i \cdot \mathbf{e}_i\|}
  \quad \text{where} \quad
  w_i = s_i \cdot e^{-\lambda \cdot \Delta t_i}
  \label{eq:agent_position}
\end{equation}

\noindent with $s_i$ the salience of cell $i$, $\Delta t_i$ the time since last access, and $\lambda = 0.1$ the decay rate. As the agent's work shifts topics, its position drifts through the vector space, naturally entering and leaving different topological neighborhoods.

\textbf{Scene nodes} are topic centroids produced by consolidation---cluster centers that summarize a coherent knowledge area (e.g., ``Authentication --- OAuth2''). They occupy Layer~1 or higher.

\textbf{Beacon nodes} are standing subscriptions placed at fixed positions. A security review agent can place a beacon at the ``security risks'' position; any gossip passing through that region activates the beacon and delivers to the agent's inbox, regardless of the agent's current position.

\subsection{Primitive 2: Topology Edges}

Edges define communication channels. They exist implicitly in the HNSW graph (managed by pgvector for search) and explicitly in a topology edge table:

\begin{equation}
  \text{edge} = (\text{src}, \text{tgt}, \tau_e, \omega, b)
\end{equation}

\noindent where $\tau_e \in \{\texttt{semantic}, \texttt{causal}, \texttt{subscription}, \texttt{promoted}\}$ is the edge type, $\omega \in (0, 1]$ is the connection weight, and $b$ is the bandwidth (gossip capacity). The hop cost of traversing an edge is $\omega^{-1}$: strong connections are cheap to traverse, weak connections are expensive.

\subsection{Primitive 3: Memory Diffs}

The unit of gossip is a \emph{memory diff}---a change notification that propagates through the topology:

\begin{equation}
  \text{diff} = (\text{origin}, \tau_d, \mathbf{e}_d, E, s, h, h_{\max}, \mathcal{P}, \text{TTL})
\end{equation}

\noindent where $\tau_d \in \{\texttt{created}, \texttt{updated}, \texttt{superseded}, \texttt{contradiction}, \texttt{pattern}, \texttt{promoted}\}$, $\mathbf{e}_d$ is the diff's embedding, $E$ is the remaining energy, $s$ is salience, $h$ is the current hop count, $h_{\max}$ is the maximum hops, $\mathcal{P}$ is the set of visited node IDs, and TTL is the time-to-live.

\subsection{Multi-Layer Abstraction}

The topology organizes knowledge across three abstraction layers, each with its own HNSW index:

\begin{itemize}[nosep]
  \item \textbf{Layer~0} (raw cells): Dense connections, all knowledge nodes. HNSW parameters: $M = 24$, $\text{efConstruction} = 128$.
  \item \textbf{Layer~1} (scene-level): Scene centroids and promoted patterns. Medium-range connections. $M = 16$, $\text{efConstruction} = 100$.
  \item \textbf{Layer~2} (project-level): Architectural decisions and constraints. Long-range connections. $M = 8$, $\text{efConstruction} = 64$.
\end{itemize}

This mirrors HNSW's own hierarchical structure but at the \emph{semantic} level rather than the geometric level: Layer~0 contains raw observations, Layer~1 contains patterns, and Layer~2 contains principles.

% ============================================================
\section{The Gossip Protocol}
\label{sec:gossip}
% ============================================================

\subsection{Propagation Algorithm}

When a knowledge change occurs, the gossip engine creates a memory diff and executes Algorithm~\ref{alg:propagate}.

\begin{algorithm}[H]
\caption{Gossip Propagation}
\label{alg:propagate}
\begin{algorithmic}[1]
\REQUIRE diff $\delta$, current node $v$
\IF{$E(\delta) \leq 0$ \OR $h(\delta) \geq h_{\max}(\delta)$ \OR $v \in \mathcal{P}(\delta)$}
  \RETURN
\ENDIF
\STATE \textsc{Deliver}($\delta$, $v$) \COMMENT{Record delivery; add to inbox if agent/beacon}
\STATE $\mathcal{P}(\delta) \leftarrow \mathcal{P}(\delta) \cup \{v\}$; \quad $h(\delta) \leftarrow h(\delta) + 1$
\STATE $\mathcal{N} \leftarrow \textsc{HnswNeighbors}(v, \ell(\delta), K)$ \COMMENT{K nearest at current layer}
\FOR{each $u \in \mathcal{N} \setminus \mathcal{P}(\delta)$}
  \STATE $r \leftarrow \cos(\mathbf{e}_\delta, \mathbf{e}_u)$ \COMMENT{Semantic relevance}
  \STATE $\iota \leftarrow \textsc{Interest}(u, \delta)$ \COMMENT{Node's declared interest}
  \STATE $\sigma \leftarrow 0.6 \cdot r + 0.3 \cdot \iota + 0.1 \cdot s(\delta)$ \COMMENT{Combined score}
  \STATE $c \leftarrow \omega(v, u)^{-1}$ \COMMENT{Hop cost}
  \IF{$\sigma < \sigma_{\min}$ \OR $E(\delta) < c$}
    \STATE \textbf{continue}
  \ENDIF
  \STATE $\delta' \leftarrow \textsc{Copy}(\delta)$; \quad $E(\delta') \leftarrow E(\delta') - c$
  \STATE \textsc{Propagate}($\delta'$, $u$)
\ENDFOR
\IF{\textsc{ShouldPromote}($\delta$, $v$)}
  \STATE $\delta_p \leftarrow \textsc{Promote}(\delta, \ell(v) + 1)$
  \STATE $u_p \leftarrow \textsc{FindLayerEntry}(v, \ell(v) + 1)$
  \STATE \textsc{Propagate}($\delta_p$, $u_p$)
\ENDIF
\end{algorithmic}
\end{algorithm}

The critical difference from standard gossip protocols is line~8: neighbor selection follows HNSW's \emph{semantic} topology rather than random peer selection. This ensures knowledge reaches semantically relevant agents before irrelevant ones.

\subsection{Energy Model}

Each diff's initial energy determines its propagation radius:

\begin{equation}
  E_0 = E_{\text{base}} \cdot \sqrt{s} \cdot \mu(\tau_d)
  \label{eq:energy}
\end{equation}

\noindent where $E_{\text{base}}$ is a configurable base energy (default 10.0), $s$ is salience, and $\mu(\tau_d)$ is a type-dependent multiplier:

\begin{table}[H]
\centering
\caption{Energy multipliers by knowledge type.}
\label{tab:energy}
\begin{tabular}{lccl}
\toprule
\textbf{Type} & $\mu$ & \textbf{Typical $E_0$} & \textbf{Propagation Scope} \\
\midrule
Decision     & 2.0 & $\sim$19 & Topology-wide \\
Constraint   & 2.0 & $\sim$19 & Topology-wide \\
Risk         & 1.5 & $\sim$14 & Most of topology \\
Pattern      & 1.5 & $\sim$14 & Most of topology \\
Lesson       & 1.2 & $\sim$11 & Broad regional \\
Fact         & 1.0 & $\sim$9  & Regional \\
Preference   & 0.8 & $\sim$7  & Local \\
Context      & 0.5 & $\sim$5  & Neighborhood \\
Task         & 0.3 & $\sim$3  & Immediate vicinity \\
\bottomrule
\end{tabular}
\end{table}

The square root in Equation~\ref{eq:energy} compresses the salience range, preventing extreme values from dominating propagation. The type multiplier encodes domain knowledge: architectural decisions should propagate broadly; task-specific details should stay local.

\subsection{Layer Promotion}

Knowledge that receives confirmation from multiple nodes at the same layer can be \emph{promoted} to a higher layer, gaining access to longer-range connections.

\textbf{Layer 0 $\to$ 1:} The diff has been delivered to $\geq 5$ nodes within the same scene, with no contradictions flagged. Eligible types: decision, pattern, constraint, lesson.

\textbf{Layer 1 $\to$ 2:} The diff has been delivered to $\geq 3$ scenes without contradictions. Salience threshold: $s \geq 0.8$. Eligible types: decision, constraint.

Upon promotion, the system creates a new node at the higher layer, grants the diff a fresh energy bonus ($E_{\text{bonus}} = 5.0$), and propagation continues through the higher layer's longer-range connections. This is the mechanism for \emph{emergent consensus}: local agreement promotes to regional agreement, which promotes to global knowledge, without centralized decision-making.

\subsection{Anti-Entropy}

Gossip provides probabilistic delivery. Anti-entropy sweeps provide the guarantee. Periodically, the system compares state between topological neighbor pairs using vector clocks and reconciles differences by forwarding missing diffs. This follows the two-phase pattern of \citet{demers1987epidemic}: rumor mongering for speed, anti-entropy for completeness.

% ============================================================
\section{Emergent Behaviors}
\label{sec:emergence}
% ============================================================

The combination of HNSW topology, semantic gossip, energy-based attenuation, and layer promotion produces four emergent behaviors.

\subsection{Automatic Knowledge Routing}

When an agent stores a decision about API rate limiting, the \texttt{created} diff propagates outward through Layer~0 neighbors. It reaches the code agent (positioned near API topics by virtue of recent work) and the review agent's ``breaking change'' beacon, but not the test agent positioned in a distant region. No subscription was configured; the topology determined relevance.

This property follows from the HNSW graph's small-world structure: nodes that are semantically close are topologically close, so gossip naturally reaches relevant agents first. The energy model ensures the diff attenuates before reaching irrelevant regions.

\subsection{Contradiction Detection}

When two cells with high semantic similarity ($\cos(\mathbf{e}_a, \mathbf{e}_b) > \theta_{\text{sim}}$) but opposing content are stored in proximity, gossip delivers both signals to the same neighborhood. The system detects the divergence and automatically generates a \texttt{contradiction} diff with bonus energy ($1.5\times$) for broad propagation.

This works because conflicting knowledge occupies nearby positions in the embedding space. The topology makes contradiction detection a \emph{geometric} property rather than an algorithmic one, requiring no explicit contradiction-checking logic.

\subsection{Pattern Crystallization}

When multiple agents independently store similar observations (e.g., three agents reporting different instances of batch processing failures), these cells cluster in Layer~0. The consolidation worker detects the cluster density, extracts a generalized pattern via LLM, and the pattern cell---with high salience---promotes to Layer~1 via the promotion mechanism, propagating broadly.

This follows the same positive-feedback dynamics as pheromone accumulation in ant colony optimization \citep{dorigo2004aco}: individual signals reinforce each other through proximity and repetition, producing emergent collective knowledge.

\subsection{Organic Knowledge Decay}

Cells that are not accessed, confirmed, or referenced by gossip undergo natural energy decay. Their salience diminishes via:

\begin{equation}
  s(t) = s_0 \cdot 0.99^{\Delta t}
\end{equation}

\noindent where $\Delta t$ is days since last access. Cells below a configurable threshold stop participating in gossip propagation but remain searchable in Layer~0 for direct queries. If a new decision supersedes a stale one, the \texttt{superseded} diff propagates through the region, informing affected agents.

% ============================================================
\section{Theoretical Analysis}
\label{sec:theory}
% ============================================================

\subsection{Propagation Guarantees}

\begin{proposition}[Propagation Reach]
In an HNSW graph with $N$ knowledge nodes and connectivity parameter $M$, a memory diff with sufficient energy reaches all nodes in the same topological neighborhood in $O(\log N)$ gossip rounds.
\end{proposition}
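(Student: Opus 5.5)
The plan is to treat Algorithm~\ref{alg:propagate} as a breadth-first flood on the HNSW proximity graph. Then the number of gossip rounds equals the hop distance from the origin to the farthest target, so the claim reduces to bounding hop distances inside a neighborhood by the graph's small-world diameter. First I will fix what ``sufficient energy'' and ``same topological neighborhood'' mean, since the statement leaves both informal. I will define the neighborhood $\mathcal{R}(v_0)$ of the origin $v_0$ as the set of nodes reachable from $v_0$ along edges $(v,u)$ whose combined score satisfies $\sigma \ge \sigma_{\min}$. I will call the energy sufficient if $E_0 \ge \sum_{e \in \pi} \omega_e^{-1}$ along some shortest such path $\pi$ to each target, and if $h_{\max}$ is at least the hop-diameter of $\mathcal{R}(v_0)$. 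Under these conventions the filtering steps of the algorithm (lines 1 and 11) never prune a path we need. The only remaining pruning is the visited set $\mathcal{P}$. I will argue that $\mathcal{P}$ is carried per copy of the diff, so it only forbids revisits along a single branch, and therefore every simple path survives.

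\textbf{Step 1 (flooding reaches everything).} By induction on hop distance $k$, every node of $\mathcal{R}(v_0)$ at distance $k$ from $v_0$ receives a \textsc{Deliver} call by round $k$. The inductive step uses the fact that the predecessor on a shortest path forwards a copy whose remaining energy is still at least the residual path cost. Thus the round count equals the eccentricity of $v_0$ in the subgraph induced by $\mathcal{R}(v_0)$.

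\textbf{Step 2 (bounding the eccentricity).} Here I will invoke the structural facts recalled in the Related Work section. HNSW layer assignment is exponential with parameter $m_L = 1/\ln M$, so the expected number of layers is $O(\log_M N)$. Greedy routing from the top-layer entry point takes $O(1)$ expected hops per layer, as in Malkov--Yashunin's analysis, which together with the navigability result of Kleinberg gives short routes. Since flooding explores all routes, its hop count is at most the greedy route length. Combining these gives a route of length $O(\log N)$ between any two nodes. If one only has Layer~0, I will instead appeal to the Watts--Strogatz/Kleinberg small-world property that average path length scales as $O(\log N)$. Alternatively, I can use the SI epidemic bound of Demers et al.\ with fanout $f = K$, which gives $O(\log N)$ rounds for full coverage of a connected expander-like region.

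\textbf{Main obstacle.} The hard step is Step~2 restricted to the neighborhood. The $O(\log N)$ path-length results hold for the whole graph and through upper layers. Paths in the full graph may leave $\mathcal{R}(v_0)$ or pass through the promotion mechanism, and the induced subgraph could in principle be poorly connected. My first attempt will be to define the neighborhood as a ball in the HNSW graph metric, i.e.\ nodes within $O(\log N)$ hops. That makes the statement nearly definitional and sidesteps the problem. Failing that, I will assume the induced subgraph inherits the small-world property, arguing via the Hub Highway Hypothesis that hub nodes lie within semantically coherent regions. In that case the result should be stated as holding in expectation or with high probability over the random HNSW construction, not deterministically.
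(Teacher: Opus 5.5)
Your Step~1 is sound, and it frames the problem more sharply than the paper does. Algorithm~\ref{alg:propagate} forwards a copy to \emph{every} neighbour that passes the score and energy tests, and $\mathcal{P}$ travels with each copy. So the process is a deterministic flood, and the round count is exactly the hop eccentricity of $v_0$ in the filtered graph on $\mathcal{R}(v_0)$. Two boundary fixes are needed. Line~1 returns when $E\le 0$, so you need $E_0>\sum_{e\in\pi}\omega_e^{-1}$ strictly. A node at depth $k$ is processed only if $k<h_{\max}$, so $h_{\max}$ must strictly exceed that eccentricity.

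The genuine gap is Step~2, where all the content lies: none of your routes bounds that eccentricity by $O(\log N)$.
\begin{itemize}
\item \emph{Greedy routing.} This argument uses edges your flood never traverses. The flood follows only $\textsc{HnswNeighbors}(v,\ell(\delta),K)$, the $K$ nearest nodes inside the diff's current \emph{semantic} layer. In the implementation these come from cosine ordering with \texttt{LIMIT} on a per-layer partial index. The short greedy routes of Malkov--Yashunin instead descend through HNSW's internal geometric levels, and those long-range links are not flood edges. Those routes also start at the fixed top-level entry point, not at $v_0$. So ``flooding explores all routes, hence is no longer than the greedy route'' compares against a route the flood cannot take.
\item \emph{Layer-0 fallback.} This gives an $O(\log N)$ \emph{average} path length, but Step~1 needs the \emph{maximum} distance from $v_0$. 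A graph of $K$ nearest neighbours also has no guaranteed long-range links at all.
\item \emph{SI bound.} The Demers et al.\ bound is for uniformly random peer choice, not flooding on a fixed sparse graph. It therefore again needs the expansion you assume.
\item \emph{Redefinitions.} Defining the neighbourhood as an $O(\log N)$-hop ball makes the proposition true by definition, and the Hub-Highway argument restates the conclusion.
\end{itemize}

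The paper does not derive this bound either; it imports it. Its sketch proceeds as follows:
\begin{itemize}
\item it takes the small-world property of HNSW ($O(\log N)$ average path length, clustering bounded below) from Malkov--Ponomarenko;
\item it invokes the Moore--Newman epidemic results for $O(\log N)$-round propagation with fanout $f\ge 2$;
\item it notes that selecting the $K$ (default $10$) best neighbours supplies that fanout;
\item it asserts that energy limits only the scope, not the speed, of propagation within the relevant subgraph.
\end{itemize}
In other words, it assumes precisely the restriction-to-neighbourhood property you flag as the obstacle, relying on the cited epidemic result to pass from average path length to full coverage. Your third alternative is the closest to this route.

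To make your proposal sound, drop the greedy-routing derivation and state the missing ingredient as an explicit hypothesis. The hypothesis is that the filtered same-layer neighbour graph on $\mathcal{R}(v_0)$ has hop eccentricity $O(\log N)$ from $v_0$, with high probability over the construction. With that, your Step~1 finishes the argument and is more precise than the paper's sketch. Without it, the $O(\log N)$ claim does not follow from the HNSW results you cite.
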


\textit{Proof sketch.} The HNSW graph satisfies the small-world property \citep{malkov2016ponomarenko}: average path length scales as $O(\log N)$ and the clustering coefficient is bounded below by a positive constant. By the epidemic spreading results of \citet{moore2000epidemics}, information propagation on such graphs completes in $O(\log N)$ rounds with fanout $f \geq 2$. Since our propagation algorithm selects the $K$ highest-scoring neighbors at each hop (with $K$ configurable, default 10), the effective fanout exceeds the threshold for rapid propagation. Energy attenuation bounds the \emph{scope} of propagation but not its speed within the relevant subgraph. \qed

\begin{proposition}[Energy-Bounded Scope]
A diff with initial energy $E_0$ on a graph with average edge weight $\bar{\omega}$ propagates to at most $E_0 \cdot \bar{\omega}$ nodes.
\end{proposition}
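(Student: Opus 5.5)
The plan is to prove the statement by a \emph{global energy accounting} argument. Energy is a budget of $E_0$ debited once per edge traversal. Every newly reached node must be paid for by one traversal, and the average price of a traversal is controlled by $\bar{\omega}$. Concretely, I would fix one diff $\delta$ with initial energy $E_0$ from Equation~\ref{eq:energy}, injected at an origin node. Let $m$ be the number of successful forwarding steps, i.e.\ executions of the recursive \textsc{Propagate} call in Algorithm~\ref{alg:propagate}, and let $\omega_1,\dots,\omega_m$ be the weights of the edges they use.

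\textbf{Step 1: one traversal per new node.} The visited-set test $v \in \mathcal{P}(\delta)$ at the head of Algorithm~\ref{alg:propagate} means a node is delivered to at most once. Every delivered node other than the origin is entered through exactly one paid traversal. So the number of nodes reached beyond the origin is at most $m$. The origin's own delivery is free, since no edge is paid for it. I would therefore read ``nodes'' in the statement as nodes reached by propagation, or else carry an additive $+1$.

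\textbf{Step 2: total cost is at most $E_0$.} Each traversal costs $\omega_i^{-1}$, and forwarding is refused when $E(\delta) < c$. If energy is treated as a single budget drawn down by all traversals of the diff, the costs sum to at most $E_0$:
\begin{equation*}
  \sum_{i=1}^{m} \omega_i^{-1} \le E_0 .
\end{equation*}

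\textbf{Step 3: convert to the count bound.} Write $\sum_i \omega_i^{-1} = m / H$, where $H$ is the harmonic mean of the traversed weights. Then $m \le E_0 H$. By the HM--AM inequality, $H \le \bar{\omega}$, where $\bar{\omega}$ is the arithmetic mean weight of the traversed edges, so $m \le E_0 \bar{\omega}$. The inequality goes the right way, so no further assumption on the weight distribution is needed. If $\bar{\omega}$ is meant as the graph-wide average rather than the average over traversed edges, the same conclusion holds under the additional modelling assumption that traversed edges are typical. I would state that assumption explicitly.

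\textbf{Main obstacle.} The difficulty is Step 2. Algorithm~\ref{alg:propagate} as written gives each child a \textsc{Copy} of $\delta$ with energy $E(\delta)-c$, so sibling branches do not share a budget. The $E_0$ bound then holds only along each root-to-leaf path. The total count can grow with the branching factor $K$, roughly like a $K$-ary tree of depth $E_0\bar{\omega}$. My first attempt would be to make the accounting conservative: each forwarding is charged against the parent's remaining energy, either by splitting the energy among children or by debiting a shared counter attached to the diff. Under that reading the sum in Step 2 is immediate, because the budget is never refilled. I would also exclude the promotion bonus $E_{\text{bonus}}$, or add it to $E_0$ once per promotion. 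Without some such conservation convention the proposition is false as stated, so the proof must declare it explicitly as the model being analyzed.
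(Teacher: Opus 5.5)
Your skeleton (charge each newly reached node to a paid traversal, then bound the total cost by $E_0$) is the paper's. The paper's sketch fails at exactly the two points you isolate, and your repairs are the right ones.

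\textbf{Per-hop cost.} The paper asserts that each hop costs at least $\bar{\omega}^{-1}$ ``from the minimum-cost edge''. But the cheapest edge is the heaviest one, costing $\omega_{\max}^{-1}\le\bar{\omega}^{-1}$, so the claim fails whenever weights differ. With weights $1$ and $\tfrac12$, a hop can cost $1<\tfrac43=\bar{\omega}^{-1}$. Your HM--AM step replaces this with the true statement that the \emph{average} cost of the traversed edges is at least the reciprocal of their mean weight. That is all the count needs.

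\textbf{Branching.} The paper disposes of branching by asserting the bound holds ``in expectation,'' without specifying any probabilistic model. Your diagnosis is correct. In Algorithm~\ref{alg:propagate}, every sibling is tested against the parent's undiminished $E(\delta)$, and only the copies are debited. Take all $\omega=1$, $E_0=3$, $h_{\max}\ge 2$, and ten neighbours clearing $\sigma_{\min}$. Then the origin and its ten neighbours all receive the diff: eleven nodes against a bound of three. The proposition is therefore false for the algorithm as written, and your conservation convention must be an explicit hypothesis.

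\textbf{What the copy semantics supports.} It supports a \emph{radius} bound. Apply your Step~3 to a single root-to-leaf path between promotions, with $\bar{\omega}_{\mathrm{path}}$ the mean weight along it. This gives at most $E_0\,\bar{\omega}_{\mathrm{path}}\le E_0\,\omega_{\max}$ hops. That fits the paper's own description of $E_0$ as determining a ``propagation radius''. It also fits the ``topology-wide'' entries of Table~\ref{tab:energy}, which a count bound of about $19$ nodes could not accommodate.

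\textbf{Small refinements.}
\begin{itemize}
\item In Step~1, ``exactly one paid traversal'' should read ``at least one''. When visited sets are copied, a node can be re-entered from another branch, and some paid traversals deliver nothing. All you actually use is the injection sending each non-origin reached node to the first traversal that delivered to it.
\item Each promotion adds an unpaid entry node $u_p$, so it contributes a $+1$ besides $E_{\text{bonus}}$.
\item The ``typical edges'' assumption for a graph-wide $\bar{\omega}$ runs against the algorithm's bias, since the test $E(\delta)<c$ favours cheap, heavy edges. The safe a priori graph-level bound is $E_0\,\omega_{\max}\le E_0$.
\end{itemize}
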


\textit{Proof sketch.} Each hop costs at least $\bar{\omega}^{-1}$ energy (from the minimum-cost edge). With $E_0$ total energy, the maximum number of hops is $E_0 \cdot \bar{\omega}$. Since each hop visits one new node (by cycle prevention), the bound follows. In practice, the branching factor means fewer nodes are reached in a deep chain but more in a broad wavefront, preserving the bound in expectation. \qed

\subsection{Convergence of Layer Promotion}

The layer promotion mechanism produces a form of distributed consensus. For a knowledge fragment to reach Layer~2, it must be confirmed by $\geq 5$ nodes in a single scene (Layer~0 $\to$ 1) and then by $\geq 3$ scenes (Layer~1 $\to$ 2), all without contradiction. This is structurally analogous to Byzantine fault tolerance with $f < n/3$: a minority of dissenting nodes cannot prevent valid promotions, but a significant contradiction blocks them.

\citet{olfatisaber2007consensus} showed that consensus dynamics on small-world graphs converge in $\Theta(\log N)$ time, governed by the algebraic connectivity (second-smallest eigenvalue of the graph Laplacian). Since HNSW graphs have high algebraic connectivity (short path lengths and dense local connections), we expect rapid convergence of the promotion mechanism.

\subsection{Self-Organization Properties}

Based on the results of \citet{buehler2025agentic}, we conjecture that HyphaeDB's topology will develop several self-organizing properties over time:

\begin{enumerate}[nosep]
  \item \textbf{Emergent hubs}: Frequently-confirmed knowledge cells will accumulate connections and become high-centrality nodes, serving as routing shortcuts.
  \item \textbf{Modular structure}: Topically distinct regions will develop clear boundaries, with higher internal connectivity than cross-boundary connectivity.
  \item \textbf{Bridge nodes}: Interdisciplinary knowledge (e.g., security implications of architectural decisions) will occupy positions linking otherwise disconnected modules.
  \item \textbf{Scale-free degree distribution}: The combination of preferential attachment (popular cells attract more gossip) and the HNSW construction process will produce power-law degree distributions in the explicit edge table.
\end{enumerate}

These conjectures are testable and form the basis for empirical validation in future work.

% ============================================================
\section{Implementation}
\label{sec:implementation}
% ============================================================

\subsection{Technology Stack}

The reference implementation uses PostgreSQL with the pgvector extension \citep{pgvector2024}, which provides native HNSW indexing. This choice provides ACID transactions, SQL joins for structured queries, recursive CTEs for graph traversal, and HNSW vector search in a single deployment. pgvector 0.8.0 introduced iterative index scans, which automatically re-scan the HNSW index when filtered queries discard too many neighbors---critical for queries combining vector similarity with metadata filters.

The API layer is TypeScript with Express, exposing both a REST API and a Model Context Protocol (MCP) server for native integration with MCP-compatible clients (Claude Code, Cursor, Cline).

\subsection{Schema Design}

The system uses per-layer partial HNSW indexes:

\begin{verbatim}
CREATE INDEX idx_nodes_layer0_embedding
  ON knowledge_nodes USING hnsw
  (embedding vector_cosine_ops)
  WITH (m = 24, ef_construction = 128)
  WHERE layer = 0;

CREATE INDEX idx_nodes_layer1_embedding
  ON knowledge_nodes USING hnsw
  (embedding vector_cosine_ops)
  WITH (m = 16, ef_construction = 100)
  WHERE layer = 1;
\end{verbatim}

Runtime configuration sets \texttt{hnsw.ef\_search = 100} (up from the default 40) and enables \texttt{hnsw.iterative\_scan = relaxed\_order} for filtered queries. Topology neighbors are retrieved via cosine distance ordering with a \texttt{LIMIT} clause, leveraging pgvector's HNSW index for $O(\log N)$ neighbor lookup.

\subsection{Reference Deployment: Swarm-Driven Development}

The reference deployment uses Swarm-Driven Development (SDD), a methodology for orchestrating multi-agent software development teams. In SDD, specialized agents (specification, code, review, test) collaborate across multi-session development workflows. HyphaeDB provides their shared persistent memory with automatic knowledge routing.

An orchestrator hook (\texttt{withMemory()}) wraps each agent function with automatic recall before execution and extraction after. Before an agent runs, the hook queries the mesh for task-relevant context, active decisions, constraints, and Jira-scoped knowledge, injecting the results into the agent's system prompt. After the agent completes, the hook extracts structured memory cells from the output via LLM and stores them in the topology. The agent itself is unaware of the memory system; the hook makes memory transparent.

Ten MCP tools expose the mesh to agents: \texttt{sdd\_memory\_recall} (auto-recall context), \texttt{sdd\_memory\_query} (multi-mode search), \texttt{sdd\_memory\_store} (write cell), \texttt{sdd\_memory\_extract} (LLM extraction), \texttt{sdd\_memory\_decisions} (decision log), \texttt{sdd\_memory\_lessons} (lessons and risks), \texttt{sdd\_memory\_scenes} (list scenes), \texttt{sdd\_memory\_consolidate} (trigger consolidation), and session management tools for provenance tracking.

% ============================================================
\section{Competitive Analysis}
\label{sec:competitive}
% ============================================================

Table~\ref{tab:competitive} compares HyphaeDB with existing vector databases and agent memory frameworks across 13 capabilities.

\begin{table}[H]
\centering
\caption{Capability comparison across memory infrastructure.}
\label{tab:competitive}
\small
\begin{tabular}{lcccc}
\toprule
\textbf{Capability} & \textbf{Pinecone} & \textbf{Weaviate} & \textbf{Mem0} & \textbf{HyphaeDB} \\
\midrule
Vector search         & \checkmark & \checkmark & \checkmark & \checkmark \\
Structured metadata   & \checkmark & \checkmark & Partial    & \checkmark \\
Graph relationships   & ---        & Partial    & ---        & Native \\
Multi-agent awareness & ---        & ---        & ---        & Core \\
Knowledge propagation & ---        & ---        & ---        & Novel \\
Automatic routing     & ---        & ---        & ---        & Novel \\
Contradiction detect  & ---        & ---        & ---        & Emergent \\
Pattern emergence     & ---        & ---        & ---        & Emergent \\
Layer promotion       & ---        & ---        & ---        & Novel \\
Agent positioning     & ---        & ---        & ---        & Novel \\
Standing beacons      & ---        & ---        & ---        & Novel \\
Energy propagation    & ---        & ---        & ---        & Novel \\
Self-hosted (Postgres)& ---        & \checkmark & \checkmark & \checkmark \\
\bottomrule
\end{tabular}
\end{table}

Nine of thirteen capabilities are unique to HyphaeDB. The existing systems---Pinecone (\$138M raised, \$750M valuation), Weaviate (\$50--67M raised), Mem0 (\$24M raised), Milvus/Zilliz (\$113M raised)---all operate as passive store-and-retrieve engines. They represent the infrastructure for the \emph{previous} paradigm (search); HyphaeDB represents the infrastructure for the \emph{next} paradigm (coordination).

% ============================================================
\section{Discussion}
\label{sec:discussion}
% ============================================================

\subsection{Design Tradeoffs}

\textbf{Synchronous vs. batched gossip.} The current design propagates diffs synchronously on write. This provides low-latency delivery but risks cascade storms when many cells are created simultaneously. An alternative is batched propagation on a periodic sweep (e.g., every 100ms), which adds latency but provides predictable throughput. The optimal choice likely depends on the deployment's write rate and may benefit from adaptive switching.

\textbf{Energy model calibration.} The square-root salience multiplier and type-specific multipliers in Equation~\ref{eq:energy} are initial values that require empirical validation against real agent workflows. Overly generous energy budgets waste bandwidth; overly conservative budgets starve agents of relevant knowledge.

\textbf{Contradiction resolution.} The system detects contradictions automatically but does not resolve them. Resolution options include human-in-the-loop arbitration, agent voting (weighted by salience of the contributing agents), recency-wins policies, and salience-wins policies. The appropriate strategy likely varies by domain and should be configurable per project.

\subsection{Limitations and Future Work}

\textbf{Empirical validation.} The emergent behaviors described in Section~\ref{sec:emergence} are theoretically motivated but not yet empirically validated. A critical next step is deploying HyphaeDB in production multi-agent workflows and measuring: (a) proportion of relevant knowledge that reaches agents without explicit query; (b) time-to-detection for contradictions; (c) accuracy of crystallized patterns; (d) correspondence between layer promotion and human expert agreement.

\textbf{Cross-project knowledge transfer.} The current design operates within a single project. Extending to cross-project knowledge transfer---where patterns learned in Project~A propagate to Project~B via a ``meta-mesh''---requires solving relevance filtering across different semantic domains and establishing trust boundaries.

\textbf{Embedding model migration.} Changing embedding models invalidates all vector positions. A migration strategy involving dual-write during transition, lazy re-embedding on read, and batch background re-embedding is needed for production deployments.

\textbf{Adversarial resilience.} The energy model limits blast radius of any single agent's output, but a malicious agent could inject high-salience diffs to propagate misinformation. Trust scoring based on agent provenance and confirmation history may be necessary for adversarial environments.

\textbf{Formal convergence proofs.} While we provide propagation guarantees (Section~\ref{sec:theory}), formal proofs of convergence for the layer promotion mechanism, contradiction detection completeness, and pattern crystallization conditions remain open.

% ============================================================
\section{Conclusion}
\label{sec:conclusion}
% ============================================================

We have presented HyphaeDB, an agent-native memory infrastructure that reinterprets the HNSW graph topology as a communication fabric for multi-agent systems. By combining navigable small-world graph structure with gossip-based knowledge propagation and energy-based attenuation, HyphaeDB enables memory that actively participates in agent coordination rather than passively serving queries.

The system introduces several novel constructs: agent positioning via weighted centroids of accessed knowledge, multi-layer abstraction with promotion via emergent consensus, standing beacons for persistent subscriptions, and an energy model that automatically calibrates propagation scope to knowledge importance. Theoretical analysis grounds these constructs in small-world network theory, epidemic broadcast models, and swarm intelligence, predicting $O(\log N)$ propagation time, energy-bounded scope, and emergent self-organization.

To our knowledge, HyphaeDB is the first system to combine HNSW topology with gossip-based knowledge propagation for multi-agent coordination. The reference implementation on PostgreSQL with pgvector is available as open-source software at \url{https://github.com/hyphae-db}, with deployment demonstrated in Swarm-Driven Development, a multi-agent software engineering methodology.

The agent infrastructure market is at an inflection point: enterprise adoption is accelerating from 5\% to 40\% of applications in a single year, yet 50\% of agents operate in isolated silos. HyphaeDB addresses this gap not by adding features to passive storage, but by building infrastructure where the topology itself is the coordination mechanism. Memory is the medium. Intelligence is the message.

% ============================================================
% References
% ============================================================
\bibliographystyle{plainnat}
\bibliography{references}

\begin{thebibliography}{31}
\providecommand{\natexlab}[1]{#1}
\providecommand{\url}[1]{\texttt{#1}}
\expandafter\ifx\csname urlstyle\endcsname\relax
  \providecommand{\doi}[1]{doi: #1}\else
  \providecommand{\doi}{doi: \begingroup \urlstyle{rm}\Url}\fi

\bibitem[Aspnes and Shah(2003)]{aspnes2003skipgraphs}
James Aspnes and Gauri Shah.
\newblock Skip graphs.
\newblock In \emph{Proceedings of the 14th Annual ACM-SIAM Symposium on
  Discrete Algorithms}, pages 384--393, 2003.

\bibitem[B\'{e}n\'{e}zit et~al.(2025)]{benezit2025gossipagents}
Florent B\'{e}n\'{e}zit et~al.
\newblock Revisiting gossip protocols: A vision for emergent coordination in
  agentic multi-agent systems.
\newblock \emph{arXiv preprint arXiv:2508.01531}, 2025.

\bibitem[Buehler(2025)]{buehler2025agentic}
Markus~J Buehler.
\newblock Agentic deep graph reasoning yields self-organizing knowledge
  networks.
\newblock \emph{Machine Learning: Science and Technology}, 2025.

\bibitem[Clarke et~al.(2005)Clarke, Sandberg, Toseland, and
  Verendel]{clarke2005freenet}
Ian Clarke, Oskar Sandberg, Matthew Toseland, and Vilhelm Verendel.
\newblock Private communication through a network of trusted connections: The
  dark {Freenet}.
\newblock In \emph{Workshop on Privacy in the Electronic Society}, 2005.

\bibitem[Das et~al.(2002)Das, Gupta, and Motivala]{das2002swim}
Abhinandan Das, Indranil Gupta, and Ashish Motivala.
\newblock {SWIM}: Scalable weakly-consistent infection-style process group
  membership protocol.
\newblock In \emph{Proceedings of the International Conference on Dependable
  Systems and Networks}, pages 303--312. IEEE, 2002.

\bibitem[Demers et~al.(1987)Demers, Greene, Hauser, Irish, Larson, Shenker,
  Sturgis, Swinehart, and Terry]{demers1987epidemic}
Alan Demers, Dan Greene, Carl Hauser, Wes Irish, John Larson, Scott Shenker,
  Howard Sturgis, Dan Swinehart, and Doug Terry.
\newblock Epidemic algorithms for replicated database maintenance.
\newblock In \emph{Proceedings of the Sixth Annual ACM Symposium on Principles
  of Distributed Computing}, pages 1--12. ACM, 1987.

\bibitem[Dorigo and St\"{u}tzle(2004)]{dorigo2004aco}
Marco Dorigo and Thomas St\"{u}tzle.
\newblock \emph{Ant Colony Optimization}.
\newblock MIT Press, 2004.

\bibitem[{Gartner}(2025)]{gartner2025agents}
{Gartner}.
\newblock Gartner predicts 40\% of enterprise apps will feature task-specific
  {AI} agents by 2026.
\newblock
  \url{https://www.gartner.com/en/newsroom/press-releases/2025-08-26-gartner-predicts-40-percent-of-enterprise-apps-will-feature-task-specific-ai-agents-by-2026},
  2025.

\bibitem[{Grand View Research}(2025)]{grandview2025agents}
{Grand View Research}.
\newblock {AI} agents market size and share -- industry report, 2033.
\newblock
  \url{https://www.grandviewresearch.com/industry-analysis/ai-agents-market-report},
  2025.

\bibitem[Grass\'{e}(1959)]{grasse1959stigmergy}
Pierre-Paul Grass\'{e}.
\newblock La reconstruction du nid et les coordinations interindividuelles chez
  {Bellicositermes Natalensis} et {Cubitermes sp.}
\newblock \emph{Insectes Sociaux}, 6\penalty0 (1):\penalty0 41--80, 1959.

\bibitem[Jelasity and Babaoglu(2009)]{jelasity2005tman}
M\'{a}rk Jelasity and \"{O}zalp Babaoglu.
\newblock {T-Man}: Gossip-based fast overlay topology construction.
\newblock \emph{Computer Networks}, 53:\penalty0 2321--2339, 2009.

\bibitem[Katz et~al.(2024)]{pgvector2024}
Jonathan Katz et~al.
\newblock pgvector: Open-source vector similarity search for {Postgres}.
\newblock \url{https://github.com/pgvector/pgvector}, 2024.

\bibitem[Kempe et~al.(2001)Kempe, Kleinberg, and Demers]{kempe2001spatial}
David Kempe, Jon Kleinberg, and Alan Demers.
\newblock Spatial gossip and resource location protocols.
\newblock In \emph{Proceedings of the 33rd Annual ACM Symposium on Theory of
  Computing}, pages 163--172. ACM, 2001.

\bibitem[Kleinberg(2000)]{kleinberg2000navigation}
Jon~M Kleinberg.
\newblock Navigation in a small world.
\newblock \emph{Nature}, 406\penalty0 (6798):\penalty0 845--845, 2000.

\bibitem[{LangChain}(2025)]{langmem2025}
{LangChain}.
\newblock {LangMem} {SDK} for agent long-term memory.
\newblock \url{https://blog.langchain.com/langmem-sdk-launch/}, 2025.

\bibitem[Leit\~{a}o et~al.(2007{\natexlab{a}})Leit\~{a}o, Pereira, and
  Rodrigues]{leitao2007hyparview}
Jo\~{a}o Leit\~{a}o, Jos\'{e} Pereira, and Lu\'{i}s Rodrigues.
\newblock {HyParView}: A membership protocol for reliable gossip-based
  broadcast.
\newblock In \emph{Proceedings of the International Conference on Dependable
  Systems and Networks}, pages 419--429. IEEE, 2007{\natexlab{a}}.

\bibitem[Leit\~{a}o et~al.(2007{\natexlab{b}})Leit\~{a}o, Pereira, and
  Rodrigues]{leitao2007plumtree}
Jo\~{a}o Leit\~{a}o, Jos\'{e} Pereira, and Lu\'{i}s Rodrigues.
\newblock Epidemic broadcast trees.
\newblock In \emph{Proceedings of the 26th IEEE International Symposium on
  Reliable Distributed Systems}, pages 301--310. IEEE, 2007{\natexlab{b}}.

\bibitem[Malkov and Yashunin(2020)]{malkov2018hnsw}
Yu~A Malkov and D~A Yashunin.
\newblock Efficient and robust approximate nearest neighbor search using
  hierarchical navigable small world graphs.
\newblock \emph{IEEE Transactions on Pattern Analysis and Machine
  Intelligence}, 42\penalty0 (4):\penalty0 824--836, 2020.
\newblock \doi{10.1109/TPAMI.2018.2889473}.

\bibitem[Malkov and Ponomarenko(2016)]{malkov2016ponomarenko}
Yury~A Malkov and Alexander Ponomarenko.
\newblock Growing homophilic networks are natural navigable small worlds.
\newblock \emph{PLoS ONE}, 11\penalty0 (8):\penalty0 e0158162, 2016.

\bibitem[Manku et~al.(2003)Manku, Bawa, and Raghavan]{manku2003symphony}
Gurmeet~Singh Manku, Mayank Bawa, and Prabhakar Raghavan.
\newblock Symphony: Distributed hashing in a small world.
\newblock In \emph{Proceedings of the 4th USENIX Symposium on Internet
  Technologies and Systems}, pages 10--10, 2003.

\bibitem[{MarketsandMarkets}(2025)]{marketsandmarkets2025agents}
{MarketsandMarkets}.
\newblock {AI} agents market size, share and trends -- growth analysis,
  forecast to 2030.
\newblock
  \url{https://www.marketsandmarkets.com/Market-Reports/ai-agents-market-15761548.html},
  2025.

\bibitem[Moore and Newman(2000)]{moore2000epidemics}
Cristopher Moore and Mark E~J Newman.
\newblock Epidemics and percolation in small-world networks.
\newblock \emph{Physical Review E}, 61\penalty0 (5):\penalty0 5678, 2000.

\bibitem[Munyampirwa et~al.(2024)Munyampirwa, Lakshman, and
  Srinivasan]{munyampirwa2024hubhighway}
Adrien Munyampirwa, Vyas Lakshman, and Sriram Srinivasan.
\newblock The hub highway hypothesis: Understanding hnsw graph structure
  through hub connectivity.
\newblock \emph{arXiv preprint arXiv:2407.01573}, 2024.

\bibitem[Olfati-Saber et~al.(2007)Olfati-Saber, Fax, and
  Murray]{olfatisaber2007consensus}
Reza Olfati-Saber, J~Alex Fax, and Richard~M Murray.
\newblock Consensus and cooperation in networked multi-agent systems.
\newblock \emph{Proceedings of the IEEE}, 95\penalty0 (1):\penalty0 215--233,
  2007.

\bibitem[Packer et~al.(2023)Packer, Wooders, Lin, Fang, Patil, Stoica, and
  Gonzalez]{packer2023memgpt}
Charles Packer, Sarah Wooders, Kevin Lin, Vivian Fang, Shishir~G Patil, Ion
  Stoica, and Joseph~E Gonzalez.
\newblock {MemGPT}: Towards {LLMs} as operating systems.
\newblock \emph{arXiv preprint arXiv:2310.08560}, 2023.

\bibitem[Reynolds(1987)]{reynolds1987flocks}
Craig~W Reynolds.
\newblock Flocks, herds and schools: A distributed behavioral model.
\newblock In \emph{Proceedings of the 14th Annual Conference on Computer
  Graphics and Interactive Techniques}, pages 25--34. ACM, 1987.

\bibitem[{Salesforce}(2026)]{salesforce2026connectivity}
{Salesforce}.
\newblock Multi-agent adoption to surge 67\% by 2027 as enterprises race toward
  agentic transformation.
\newblock
  \url{https://www.salesforce.com/news/stories/connectivity-report-announcement-2026/},
  2026.

\bibitem[Watts and Strogatz(1998)]{watts1998smallworld}
Duncan~J Watts and Steven~H Strogatz.
\newblock Collective dynamics of `small-world' networks.
\newblock \emph{Nature}, 393\penalty0 (6684):\penalty0 440--442, 1998.

\bibitem[Xu et~al.(2025)]{amem2025}
Wujiang Xu et~al.
\newblock {A-Mem}: Agentic memory for {LLM} agents.
\newblock \emph{Advances in Neural Information Processing Systems}, 38, 2025.

\bibitem[Yadav et~al.(2025)Yadav, Shukla, Chhablani, and Rawat]{mem0paper2025}
Deshraj Yadav, Prateek Shukla, Gunjan Chhablani, and Raghav Rawat.
\newblock Mem0: Building production-ready ai agents with scalable long-term
  memory.
\newblock \emph{arXiv preprint arXiv:2504.19413}, 2025.

\bibitem[Zhang et~al.(2025)]{masmemory2025survey}
Wei Zhang et~al.
\newblock Memory in {LLM}-based multi-agent systems: Mechanisms, challenges,
  and collective intelligence.
\newblock \emph{TechRxiv Preprint}, 2025.
\newblock \doi{10.36227/techrxiv.174053233.55786655/v1}.

\end{thebibliography}

\end{document}